\documentclass{article}

\usepackage{microtype}
\usepackage{graphicx}
\usepackage{subcaption}
\usepackage{booktabs} 
\usepackage{nicefrac}       

\usepackage{hyperref}

\usepackage{amsmath}
\usepackage{amssymb}
\usepackage{mathtools}
\usepackage{amsthm}

\usepackage[capitalize,noabbrev]{cleveref}

\theoremstyle{plain}
\newtheorem{theorem}{Theorem}[section]
\newtheorem{proposition}[theorem]{Proposition}
\newtheorem{lemma}[theorem]{Lemma}

\theoremstyle{definition}
\newtheorem{definition}[theorem]{Definition}
\newtheorem{assumption}[theorem]{Assumption}
\theoremstyle{remark}
\newtheorem{remark}[theorem]{Remark}

\usepackage[textsize=tiny]{todonotes}

\begin{document}

  \title{Causal Inference on Stopped Random Walks in Online Advertising}

    \author{Jia Yuan Yu}
    \date{\today}

\maketitle

\begin{abstract}
We consider a causal inference problem frequently encountered in online advertising systems, where a publisher (e.g., Instagram, TikTok) interacts repeatedly with human users and advertisers by sporadically displaying to each user an advertisement selected through an auction.
Each treatment corresponds to a parameter value of the advertising mechanism (e.g., auction reserve-price), and we want to estimate through experiments the corresponding long-term treatment effect (e.g., annual advertising revenue).
In our setting, the treatment affects not only the instantaneous revenue from showing an ad, but also changes each user's interaction-trajectory, and each advertiser's bidding policy---as the latter is constrained by a finite budget.
In particular, each a treatment may even affect the size of the population, since users interact longer with a tolerable advertising mechanism.
We drop the classical i.i.d. assumption and model the experiment measurements (e.g., advertising revenue) as a stopped random walk, and use a budget-splitting experimental design, the Anscombe Theorem, a Wald-like equation, and a Central Limit Theorem to construct confidence intervals for the long-term treatment effect.
\end{abstract}




\section{Introduction}


Online advertising platforms---or publishers for short---often encounter the problem of selecting one out of several competing changes to deploy to their users.
This selection is usually done by answering a causal inference question such as: What is the annualized lift in revenue if we change this parameter of the mechanism that selects which ads to display to the user?
Given an ad auction mechanism, should we add a small fee to each winner's payment or change the reserve price? 

\subsection{Markovian dynamics}

The classical approach to answer inference questions is to run an A/B test---or experiment, and use the Central Limit Theorem (CLT) to generalize an empirical average over a few measurements to performance guarantees about the system as a whole.
The classical approach works well on isolated systems (e.g., individual drug treatments on patients), but online advertising is a large interconnected system.
In such systems, inference is a hard because the mere decision to take measurements of the treatment effect changes what we want to measure. To illustrate, imagine that you are a police agent being tasked to measure the congestion on different roads, while driving your patrol vehicle. You can measure the congestion by driving around, but your mere presence changes the traffic patterns in a broad area---with an effect that sometimes persists even after you are gone. Moreover, once you take one road, you only observe that road, you can't observe the traffic on other roads at the same time.

In online advertising, when we change the method of allocating ads to impression opportunities, we show different ads and thereby change the trajectory of the user, resulting in different clicks, purchases, and numbers of impression opportunities. Moreover, we change the trajectory of the advertisers' budgets, with may result in a willingness to pay higher amounts for an impression when there is a leftover budget near a deadline.

To tackle inference in such complex systems, we model the trilateral interactions between publisher, users, and advertisers as a Markov chain. For instance, we model the user's page-transitions on the publisher's webpage as a Markov chain: each click on a hyperlink is a transition from one page to another. Each advertiser has a budget that updates as another Markov chain at each impression opportunity auction.

\subsection{Random treatment-dependent population}

A new difficulty arises when we want to infer a long-term business metric, such as the publisher's revenue over a time-scale much larger than that of ad impressions.
This long-term metric clearly depends on both the number of ad impressions delivered and the payment received for each impression.
We model it as a partial sum associated with a Markov chain with a stopping time, i.e., a stopped random walk. 
In turn, we employ a budget-splitting experimental design and infer the treatment effect with another stopped random walk.
As a result, we generalize the deterministic population size of classical causal inference to a random treatment-dependent population.

\subsection{Application to ad auctions}

As an illustrative example, consider a publisher has a set of webpages or pages, and each page contains a placement (slot) dedicated to displaying ads. Whenever an user visits a page, an impression opportunity is generated: each advertiser has an opportunity to impress the user with its ad. This opportunity is allocated to an advertiser through a auction (e.g., second-price or first-price auction) with a publisher-set reserve price.
Advertisers submit bids in this auction.
Each bid is a function of the page-context (what appears on the page, etc.), user-context (search keywords, etc.), placement-context (click through rate, etc.), and advertiser-context (budget remaining, deadline, etc.). 

There is an opportunity to optimize the reserve price of the auction: when it is too high, many opportunities are unsold, if it is too low, it does not affect the advertisers' payments.
Although the optimal reserve price is well-known in the setting of a single isolated auction \cite{myerson}, our setting is much more complex. Various strategies---or treatments---have been proposed under different assumptions. One way to select the best strategy is to run experiments, and generalize measurements from the experiments to the treatment effects of deploying these treatments in the long-term.


Suppose that we are interested in the publisher's advertising revenue.
To define this effect, we first need to consider two trajectories or potential outcomes \cite{po} corresponding to two treatments. 
For discussion sake, consider two treatments corresponding to two values for the reserve price in the auctions that allocate ads to impression opportunities.
For example, suppose that the first trajectory corresponds to an user session with reserve price 5 USD, and the second trajectory is a session for the same user with a reserve price of 1 USD. 
We can only observe one of these two trajectories, say the first one.
In the observed trajectory, most auctions result in unsold impression opportunities, advertisers pay more for each impression, and the user purchases a toy.
In the other trajectory, most ad impressions are delivered, possibly resulting is a longer session where the user purchases a laptop.
Here, the effect of interest\footnote{In other applications, we may be interested in the joint effect on advertising and retail revenue.} is the difference between the expected advertising revenue that the publisher accumulates over these two trajectories.

In the following sections, we present the model of the interactions between participants in online advertising (Section~\ref{sec:model}), describe the inference problem (Section~\ref{sec:in}), and the corresponding performance guarantee (Section~\ref{sec:main}).
We make three main contributions.
\begin{enumerate}
    \item On the modeling front, we introduce a new notion of treatment effect with a random treatment-dependent population size, and a stopped random walk model for the complex interactions between users, advertisers, and the publisher.
    \item On the analytical front, we introduce a permutation trick to greatly reduce the dimension of the state space at the expense of a mild assumption on the relative time scales of user-session and advertiser campaign duration.
    \item We combine a Markovian CLT with a Wald-like equation and the Anscombe Theorem to derive confidence intervals in spite of the random population size. 
    This provide in turn guidance on the necessary duration for experiments.
\end{enumerate}


\section{Model}\label{sec:model}

In this section, we model the user-publisher-advertiser interactions in online advertising, with the goal of capturing two main observations:
\begin{enumerate}
    \item Each user's session, i.e., sequence of pages visited between login and logout, is a trajectory that is approximately Markovian;
    \item Each advertiser's bidding policy is a function of the current auction and the remaining budget, which itself is a function of the trajectory of past auctions.
\end{enumerate}

To make things concrete, consider first the simplest case of a sequence of impression opportunities $1,2,\ldots$ in a setting where a single user navigates through pages on the publisher's webpage, and where there is a single advertiser. 
Throughout the paper, we will refer to these impression opportunities as ``\emph{opportunities},'' ``\emph{impressions},'' or ``\emph{pages}'' depending on the context.
As the user navigates through pages of the publisher, we obtain the following sequence of pairs of opportunities and remaining ad budgets: $(X_1,\Omega_1),\ldots, (X_n,\Omega_n), \ldots$ This sequence is Markovian if we assume that the user goes from one page $X_n$ to the next $X_{n+1}$ by clicking on hyperlinks and ads on each page\footnote{A similar Markovian model of user behavior on the Web is the basis of Google's Pagerank algorithm \cite{pagerank}; where webpages are states and transition probabilities are assigned to hyperlinks.}. The advertiser's remaining budget is decremented by payments to the publisher for each rendering of or each click on an ad, and incremented by budget replenishment over time.


To generalize this model to multiple users and multiple advertisers competing through repeated auctions,
consider a setting with a set of users $[d] = \{1,\ldots,d\}$ and $m$ advertisers, and let $n=1,2,\ldots$ denote the sequence of impression opportunities or page-transitions.
For each $n$, we let $I_n \in [d]$ denote the user who transitions from one page to another, which creates the opportunity $n$.
We let $X^i_n \in \mathcal X$ denote the page that is displayed to user $i$ at the impression opportunity $n$. 
In general, the state space $\mathcal X$ for the ``page'' also encodes characteristics of the user-session that allow advertisers to bid accordingly.
Opportunity $n$ coincides with a page-transition for user $I_n$ from $X^{I_n}_{n-1}$ to $X^{I_n}_n$; there is no transition for other users: $X^j_n = X^j_{n-1}$.
For each $n$, we define the profiles of pages for all users as $\mathbf X_n = (X_n^1, \ldots, X_n^d) \in \mathcal X^d$, and the profile of budget remaining for all advertisers as $\Omega_n \in \mathbb R^m$.
The sequence of interactions can be described by the Markov chain
\begin{align}\label{eq:mc}
    (\mathbf X_1, \Omega_1, I_1), \ldots, (\mathbf X_n, \Omega_n, I_n), \ldots,
\end{align}
where the state-space is now much larger than the single-user single-advertiser case.

\subsection{Permutation trick to reduce dimensionality}

When the number of users $d$ is large, the Markov chain \eqref{eq:mc} has a huge state space containing $\mathcal X^d$ that cannot be handled by existing methods to analyze convergence to an ergodic limit. 
We introduce in this section a permutation trick that approximates the Markov chain \eqref{eq:mc} with another Markov chain that encodes essentially the same information, but has a vastly smaller state space.

The permutation trick is motivated by the following observation: at a store's checkout conveyor belt, whether we group items by type, or even interleave items from different users, effect on sales and cashflow is localized.
Likewise, whether the publisher serves ads sequentially to multiple concurrent users or sequentially to one user after another, the trajectory of pages visited and the trajectory of remaining-budgets are approximately the same.
This is true as long as the time- and budget-scales of user-sessions are much smaller than the scales of entire ad campaigns.


Informally, the permutation groups together all page-transitions corresponding to the same user-session (sharing the same value $I_n$). This combination effectively allows us to turn a Markov chain where page-transitions from concurrent sessions are interleaved into another Markov chain where only entire user-sessions are interleaved.


\begin{figure*}
     \centering
\begin{tabular}{l*{6}{c}r}
Impression $n$                   & 1 & 2 & 3 & 4 & 5  & 6 & \ldots \\
Active user index ($I_n$)      & 1 & 2 & 1 & 3 & 2 & 1 & \ldots \\
Page ($X^{I_n}_n$) & $X^1_1$ & $X^2_2$ & $X^1_3$ & $X^3_4$ & $X^2_5$ & $X^1_6$ & \ldots \\
Permuted user index $\tilde I_{n}$          & 1 & 1 & 1 & 2 & 2 & 3 & \ldots \\
Permuted page $\tilde X_n$ & $X^1_1$ & $X^1_3$ & $X^1_6$ & $X^2_2$ & $X^2_5$ & $X^3_4$ & \ldots \\
\end{tabular}
        \caption{Permutation $\sigma$ with $T=6$ that groups page-transitions together by user-session.}
        \label{fig:perm}
\end{figure*}

We construct the approximate Markov chain with the following infinite permutation.
For a fixed $T$, let $\sigma$ denote the permutation function that iterates over $i=1,2,\ldots$, creates a fixed window of $T$ impressions $U_i = (i, \ldots, i+T-1)$, and permutes the impressions $U_i$ such that they are ``grouped together'' by active user.
Formally, let $I_1, I_2, \ldots$ denote the original sequence of active users, and let 
\begin{align*}
    \tilde I_1, \tilde I_2, \ldots = I_{\sigma^{-1}(1)}, I_{\sigma^{-1}(2)}, \ldots
\end{align*}
denote the permuted sequence of active users.
The permuted sequence satisfies the following properties for all indices $k,\ell,m \in \{k,\ldots,m\}$:
\begin{enumerate}
    \item (grouping by active user) if $|k-m|\leq T$ and $I_{k} = I_{m}$, then $I_\ell = I_k$ for all $\ell \in \{k,\ldots,m\}$;
\item (order-preserving) if $k < m$, then $\sigma(k) < \sigma(m)$.
\end{enumerate}
The window length $T$ ensures that we only group pages traversed during a single user-session.
Figure~\ref{fig:perm} illustrates the permutation $\sigma$ with an example\footnote{This permutation is connected to the problem of partial sorting with a sliding window}.

The permutation $\sigma$ encodes essentially the same information as the Markov chain $(\mathbf X_n, \Omega_n, I_n)$, but with impressions are grouped sequentially by user-session.
The permuted Markov chain over the much smaller state space $\mathcal X \times \mathbb R^m \times [d]$ is denoted by
\begin{align*}
    \Phi_n \triangleq (\tilde X_n, \tilde \Omega_n, \tilde I_n) = (X_{\sigma^{-1}(n)}^{I_{\sigma^{-1}(n)}}, \tilde \Omega_n, I_{\sigma^{-1}(n)}).
\end{align*}
At every page-transition $n$, if the active user $\tilde I_{n-1}$ transitions to a new user, then the next page $\tilde X_n$ is the first page of the new user's session, otherwise, the page-transition from $\tilde X_{n-1}$ to $\tilde X_n$ follows the same probability law as in the Markov chain \eqref{eq:mc}.
Whereas $\tilde X_n$ and $\tilde I_n$ are constructed by applying the same permutation on $X_n$ and $I_n$, the sequence $\tilde \Omega_n$ is not a permutation of $\Omega_n$: it denotes the dynamics of the remaining-budget for all advertisers if they compete on the permuted sequence of impression opportunities $\tilde I_n$.

The permutation trick gives us a powerful way to perform experiments on the actual advertising system, while analyzing convergence on a much smaller system.
To control the incurred approximation error, we assume throughout this paper that the length of ad campaigns is large compared to the length of  sessions and the number of concurrent users, and the ad campaign budget is large compared to the bid on each individual impression.

\begin{assumption}[Scale of ad campaigns]\label{as:session}
The maximum number of impressions in an user-session is $L$. At every given time instant, the maximum number of concurrent users-sessions is $S$, so that all pages corresponding to the same user-session have indices at most $LS$ apart in the Markov chain $\{\mathbf X_n\}$ of \eqref{eq:mc}.
The maximum payment by an advertiser on an impression is $B$.
Informally, we assume that every ad campaign has a total budget that is much larger than $B$ and a duration much larger than $L$ and $S$.
\end{assumption}

\subsection{State-transition dynamics for online advertising}\label{sec:mc}

In this section, we describe the state-transition dynamics that generate the Markov chain $\Phi_n$. In our online advertising setting, advertisers and the publisher interact through profiles of bids $Y_1,Y_2,\ldots$ and auction reserve prices $p_1,p_2,\ldots$.
On the one hand, each advertiser uses a sequence of bid to control the page trajectory $\tilde X_n$ toward its target page to sell its product, while paying as little as possible.
On the other hand, the publisher uses a sequence of reserve prices to control both the page-trajectory $\tilde X_n$ and advertiser budget-trajectory $\tilde \Omega_n$ so as to maximize long-term publisher revenue.
For simplicity, we assume that the publisher and every advertiser follow stationary strategies, which may be learned from historical interactions.

Specifically, we assume that the publisher takes the current page $\tilde X_n$ and generates a reserve price
\begin{align}\label{eq:p}
    p_n = p(\tilde X_n), \quad\mbox{for all }n=1,2,\ldots,
\end{align}
with a fixed function $p:\mathcal X\to \mathbb R$. In practice, the publisher learns a sequence of functions $p_1,p_2,\ldots$ from a history of interactions (bid profiles, revenues, etc.), but we do not model the learning problem here.
More generally, the publisher may set a personalized reserve price for each advertiser, but we consider for simplicity a single reserve price for all bidders.

The $m$ advertisers take the page $\tilde X_n$, the reserve price $p_n$, and the profile of remaining budgets $\tilde \Omega_n$, and respond with a bid profile
\begin{align}\label{eq:y}
    Y_n = Y(\tilde X_n,p_n,\tilde \Omega_n), \quad \mbox{for all }n=1,2,\ldots,
\end{align}
with a fixed function $Y: \mathcal X \times \mathbb R \times \mathbb R^d \to \mathbb R^m$.
In practice, each advertiser entrusts a campaign budget to a proxy bidder that participates in the second-price on its behalf.


In practice, the page $\tilde X_n$ is not simply a page, but a context composed of an array of categorical and numerical variables, containing notably information such as how many times each user has viewed each ad, histories of user-advertisement interactions, content surrounding the impression opportunity, search keywords, etc.

Let $\xi_n$ denote a sequence of i.i.d. random variables that models the uncertainty in the decisions of the users: whether a session ends, which hyperlink the user clicks on, etc.
We assume that the page $\tilde X_{n+1}$ of the next impression is a function of the current page $\tilde X_n$, the reserve price $p_n$, the bid profile $Y_n$, the current active user $\tilde I_n$, and the auxiliary random variable $\xi_n$: 
\begin{align}\label{eq:x}
    \tilde X_{n+1} &= h(\tilde X_n, p_n, Y_n, \tilde I_n, \xi_n)\nonumber\\
    &= h\Big(\tilde X_n, p(\tilde X_n), Y\big(\tilde X_n, p(\tilde X_n), \tilde \Omega_n\big), \tilde I_n, \xi_n\Big),
\end{align}
for $n=1,2,\ldots$,
where the last equality follows from \eqref{eq:p} and \eqref{eq:y}.

Let $\zeta_n$ denote a sequence of i.i.d. random variables that models the uncertainty in the budget replenishment decisions of advertisers.
We assume that the remaining-budget profile $\tilde \Omega_n$ is updated according to the payment made by the winning advertiser (if any) of each auction and the replenishment decision of each advertiser.
Hence, $\tilde \Omega_{n+1}$ is a function of the current profile $\tilde \Omega_n$, the payment for the current impression, and the auxiliary random variable $\zeta_n$:
\begin{align}\label{eq:omega}
    \tilde \Omega_{n+1} &= g(\tilde \Omega_n, p_n, Y_n, \zeta_n)\nonumber\\
    &= g\Big(\tilde \Omega_n, p(\tilde X_n), Y\big(\tilde X_n, p(\tilde X_n), \tilde \Omega_n\big), \zeta_n\Big),
\end{align}
for $n=1,2,\ldots$
This sequence is private to the advertisers and not observed by the publisher.


\begin{remark}[Two time scales]
The process $\tilde \Omega_n$ evolves on a much slower time scale compared to $\tilde X_n$ because each impression costs a tiny fraction of the overall budget.
\end{remark}

\subsection{Publisher revenue}

This section describes the publisher's auction revenue that forms the basis for the target metric for causal inference.
For each impression opportunity $n$,
the winner among $Y_n$ of the second-price auction with (non-personalized) reserve price $p_n$ is allocated the opportunity and
pays the publisher a fee of $r(p_n,Y_n)$. 
Observe that $r: \mathbb R \times \mathbb R^m$ is a non-linear (even discontinuous) function of reserve price:
\begin{align*}
    r(p_n, Y_n) = \mathrm{smax}(p_n, Y_n) \mathbb I_{[\max(Y_n) \geq p_n]},
\end{align*}
where $\mathrm{smax}(Y_n,p_n)$ denote the second-highest value among bids in $Y_n$ and reserve price $p_n$, and $\mathbb I$ denotes the indicator function. Figure~\ref{fig:reserve-rev} illustrates the auction revenue as a function of the reserve price in the presence of two bids of 1 and 2 USD.

\begin{figure}
         \centering
         \includegraphics[width=\textwidth/2]{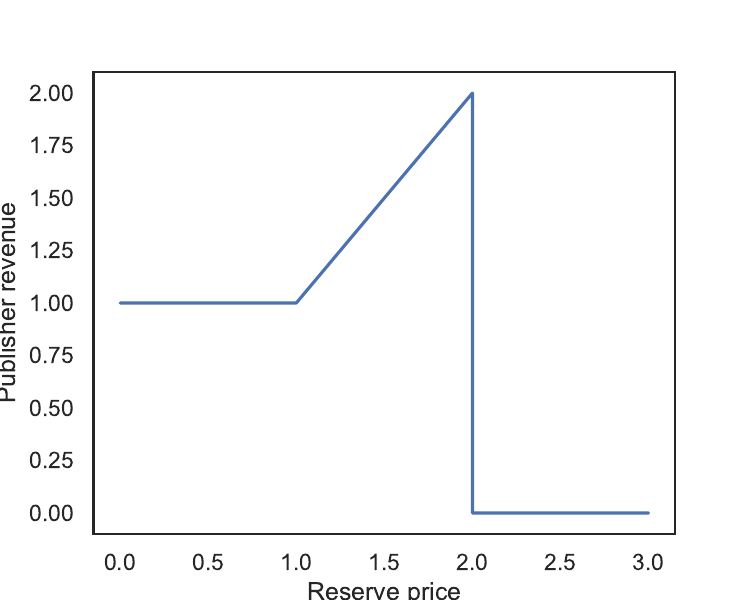}
         \caption{Auctioneer revenue $r(\cdot,Y)$ for a single auction with fixed bid profile $(1,2)$.}
        \label{fig:reserve-rev}
\end{figure}




\section{Inference}\label{sec:in}

In this section, we evaluate the performance of two reserve price treatments by using a variant of randomized experiments.
First, we define the notion of long-term treatment effect  in a new setting where both the number of interactions and the revenue per interaction are affected by the treatment (e.g., total revenue over $k$ days).
Our main result is a confidence bound for a method to infer the treatment effect from a randomized experiment \cite{linkedin}.

Although this work extends the causal inference literature, we make a conscious effort to appeal to a broader audience by using language from statistics instead of more technical term like "units," SUTVA, etc.

\subsection{Expected long-term treatment effect}

In this section, we define the expected long-term treatment effect that we want to infer through experiments.
We start from a setting similar to the potential outcomes model of \cite{po}.
Consider a population composed of impression opportunities and two treatments corresponding to two publisher strategies for reserve prices. For simplicity and without loss of generality, we assume that these strategies are static reserve prices $v \in\mathbb R$ and $w \in\mathbb R$.
The publisher is interested in optimizing the expected ad revenue accumulated over a long period of time, say $k$ days. In a major departure from classical causal inference, the number of impressions in $k$ days is random and depends on the publisher's reserve-price strategy.
We model this long-term revenue as a stopped random walk.

\begin{definition}[Markov random walk, stopped random walk]\label{def:Phi}
We let $\Phi(v)$ denote the Markov chain $\Phi$ induced by the constant reserve price function $p(X) = v$ for all $X \in \mathcal X$.
The partial average revenue and partial sum revenue are denoted by
\begin{align}\label{eq:bar-r}
S_n(v, \Phi(v)) &= \sum_{i=1}^n r(v, Y(\tilde X_i, v, \tilde \Omega_i)), \nonumber\\
    \bar r_n(v, \Phi(v)) &= \frac{1}{n} \sum_{i=1}^n r(v, Y(\tilde X_i, v, \tilde \Omega_i)).
\end{align}
The sequence $\{(\Phi_n, S_n)\}$ is a \emph{Markov random walk} \cite{fuhlai}. The additive component $\{S_n\}$ is a renewal process \cite{anscombe60} and the sequence $\{(\Phi_n, S_n)\}$ also called a Markov-additive process \cite{sadowsky}.
Let $\{\tau(k)\}$ denote a sequence of stopping times, then the sequence $\{S_{\tau(k)}\}$ is a \emph{stopped random walk} \cite{anscombe60}.
\end{definition}

For a static reserve price $v$ and the Markov chain $\Phi(v)$, let $K^v_k$
denote the stopping time corresponding to the number of impression opportunities generated over $k$ days in the Markov chain $\Phi_n(v)$.
We define the expected effect of treatment $v$ over a baseline treatment $w$ as
\begin{align}\label{eq:delta}
    \Delta \triangleq \mathbb E \left[ S_{K^v_k}(v, \Phi(v)) - S_{K^w_k}(w, \Phi(w)) \right],
\end{align}
where the partial average $\bar r$ is defined in Definition~\ref{def:Phi}.
Observe that $K^v_k$ is a stopping time with respect to the Markov chain $\Phi$ as long as we also keep track of time with an additional state variable.  This fact will allow us later to apply a version of Wald's equation.

We decide to adopt treatment $v$ or $w$ based on $\Delta$ or the treatment effect $\Delta/k$ normalized by the number of time periods.
The next subsections will present an experimental method to estimate $\Delta$ along with confidence intervals for the estimator.

Although the Central Limit Theorem for Markov chains gives us a direct method to estimate the average revenue per impression $\bar r_n$, we are instead interested in the average revenue over a time period, where the number of impressions in that period is a stopping time.
In particular, the existing literature considers a random process $S_n$ with a deterministic $n$ (cf. \cite{farias2022markovian}), whereas we consider the stopped random walk $S_{K_k^v}$.

Observed that, in contrast to the potential outcome model \cite{rubin}, 
the partial sums in \eqref{eq:delta} are over different stopping times, since different reserve prices lead generally to user-sessions of different lengths and hence different Markov chains $\Phi(v)$ and $\Phi(w)$.
We consider a much more challenging inference problem where the treatment effect is measured on the time scale of days, whereas the experiment measurement is at the scale of ad impressions.
As we show in Section~\ref{sec:main}, this is possible as long as the dynamics that govern user- and advertiser- decision-making is ergodic.

A natural question is: Why do we take a measurement scale different from the treatment effect scale? The main reason is experiment cost: defining each measurement as the daily ad revenue would create long experiments with significant opportunity cost. This cost quickly becomes prohibitive when we introduce multiple treatments and control the rate of false discoveries. Our approach uses our domain knowledge of the specific dynamics of the advertising system to infer long-term effects from fewer measurements.





\subsection{Experimental design: budget-splitting}

In this section, we describe the experiment that we employ to infer the treatment effect $\tilde \Delta$ and derive confidence intervals under mild assumptions.
Our budget-splitting experiment is inspired by \cite{linkedin} and proceeds as follows.
\begin{enumerate}
    \item Assign each user in $\{1,\ldots,d\}$ into either subpopulation $A$ or $B$ with probability $q$ and $1-q$.
    
    \item Fix the number of periods $k$ for the experiment.
    
\item We assign these population to reserve price treatments $p=v$ and $p=w$,
resulting in two distinct Markov chains $\Phi^A(v)$ and $\Phi^B(w)$.

\item For each advertiser in $\{1,\ldots,m\}$, we introduce one copy for $\Phi^A(v)$ and another for $\Phi^B(w)$, each of these copies starts with a proportional fraction $q$ and $1-q$ of the budget, and every replenishment is assigned to the two copies proportionally.

\item Repeat until $k$ periods have elapsed: each impression $n$ causes a transition in $\Phi^A(v)$ if the active user is $I_n \in A$ or in $\Phi^B(w)$ otherwise. 
\end{enumerate}

At the end of the experiment, i.e., after $k$ periods, we let $V_k$ and $W_k$ denote the numbers of impressions in $\Phi^A(v)$ and in $\Phi^B(w)$, respectively.
Observe that although $A$ and $B$ have approximately the same number of users, the number of impressions $V_k$ and $W_k$ can be very different: for instance, a high reserve price $v$ may effectively prevent ads to be displayed to users in group $A$ and lead to few impressions.

At the end of the experiment, we also obtain the following impression-count and average-revenue measurements:
\begin{align}\label{eq:measurements}
    V_k, \quad W_k, \quad \bar r_{V_k}(v, \Phi^A(v)), \quad \bar r_{W_k}(w, \Phi^B(w)).
\end{align}
In turn, we can construct the following estimate for $\Delta$:
\begin{align}\label{eq:hatdelta2}
    \hat \Delta & \triangleq \frac{V_k}{q} \cdot \bar r_{V_k}(v, \Phi^A(v)) - \frac{W_k}{1-q} \cdot \bar r_{W_k}(w, \Phi^B(w)).
\end{align}
By comparing the first terms (treatment $v$) of \eqref{eq:delta} and \eqref{eq:hatdelta2}, we see that (informally) $\hat \Delta$ is a good estimator for $\Delta$ as long as we can approximate
the total number of impressions $K^v_k$ on the whole user-population Markov chain $\Phi$ by scaling by $1/q$ the corresponding number $V_k$ on the $A$-subpopulation Markov chain $\Phi^A$,
and the average revenue per impression $\bar r_{K^v_k}(v, \Phi(v))$ on $\Phi$ with the corresponding quantity $\bar r_{V_k}(v, \Phi^A(v))$  on $\Phi^A$.

\subsection{Main result: confidence interval for $\Delta$}\label{sec:main}

Our main result is a confidence interval for the long-term treatment effect $\Delta$. It requires the following standard assumptions.

\begin{assumption}[Time invariant reserve and bids]\label{as:1}
There exist functions $p$ and $Y$ such that
$p_i = p(X_i)$ and $Y_i = Y\big(X_i,p(X_i),\Omega_i\big)$ for every auction $i$.
\end{assumption}

\begin{assumption}[Ergodicity]\label{as:2}
The Markov chain $\Phi$ is aperiodic, $\varphi$-irreducible, and
Harris ergodic with invariant distribution $\pi$. Moreover, it is geometrically ergodic with $\mathbb E_\pi r^2(x) (\log^+|r(x)|) < \infty$.
\end{assumption}

\begin{remark}[Invariant distribution]
Harris ergodicity implies a unique invariant distribution, which depends on the Markov chain dynamics through the random sequences $\xi_n$, $\zeta_n$, and the functions $p$, $Y$, $h$, and $g$. 
\end{remark}

\begin{assumption}[Impressions per day converges]\label{as:K}
For a fixed reserve price $v$ (and likewise for $w$), the expected number of impressions per day converges in probability to a positive number $\eta_v \in \mathbb R$:
\begin{align*}
    K^v_k/k \to \eta_v \quad \mbox{as } k\to\infty.
\end{align*}
\end{assumption}

\begin{theorem}[Confidence interval for $\Delta$]\label{thm:main}
Suppose that the Markov chains $\Phi(v)$, $\Phi^A(v)$, $\Phi(w)$, $\Phi^B(w)$ satisfy Assumptions~\ref{as:session},~\ref{as:1},~\ref{as:2}, and~\ref{as:K}. Let $\pi(v)$, $\pi'$, $\pi(w)$, and $\pi''$ denote their respective invariant distributions.
Let $\sigma'$ and $\sigma''$ denote the asymptotic variances of $\Phi^A(v)$ and $\Phi^B(w)$, and let
\begin{align}\label{eq:56}
    \epsilon = &\max\Big(
    2\eta_v \left|\mathbb E_{\pi'} r - \mathbb E_{\pi(v)} r \right| + \frac{4}{q} z_{\alpha/2} \sigma' \sqrt{V_k},\nonumber \\
&2\eta_v \left|\mathbb E_{\pi''} r - \mathbb E_{\pi(w)} r \right| + \frac{4}{q} z_{\alpha/2} \sigma'' \sqrt{W_k}\Big),
\end{align}
where $z_{\alpha/2} = \Psi^{-1}(1 - \alpha/2)$ and $\Psi$ is the standard Normal distribution function.
Then, for every 
$\alpha>0$,
\begin{align*}
    \lim_{k\to\infty} \mathbb P(|\hat \Delta - \Delta| \leq \epsilon)
    \geq 1-2\alpha.
\end{align*}
\end{theorem}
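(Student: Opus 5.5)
The plan is to split $\hat\Delta-\Delta$ into a $v$-part and a $w$-part and show that each is at most $\epsilon/2$ in absolute value with asymptotic probability at least $1-\alpha$. A union bound then gives the $1-2\alpha$ level. Write
\begin{align*}
D_v \triangleq \frac{V_k}{q}\,\bar r_{V_k}(v,\Phi^A(v)) - \mathbb E\, S_{K^v_k}(v,\Phi(v)),
\end{align*}
and define $D_w$ analogously with $W_k$, $1-q$ and $\Phi^B(w)$. Then $|\hat\Delta-\Delta|\le |D_v|+|D_w|$, and since $\epsilon$ is the larger of the two bounds in \eqref{eq:56}, it suffices to show $|D_v|\le \epsilon_v/2$ and $|D_w|\le\epsilon_w/2$, where $\epsilon_v,\epsilon_w$ are the two entries of the maximum. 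I treat only $D_v$; the $w$-part is symmetric.

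\textbf{Step 1 (Wald-like equation for the target).} Append elapsed time to the state, so that $K^v_k$ becomes a stopping time for $\Phi(v)$ (the paper notes this after \eqref{eq:delta}). A Markov-additive Wald identity then gives
\begin{align*}
\mathbb E\, S_{K^v_k} = \mathbb E_{\pi(v)}r\cdot\mathbb E K^v_k + O(1).
\end{align*}
The $O(1)$ term comes from the solution of the Poisson equation, which is bounded by geometric ergodicity (Assumption~\ref{as:2}). Combining this with Assumption~\ref{as:K} and uniform integrability gives $\mathbb E S_{K^v_k}\approx k\eta_v\,\mathbb E_{\pi(v)} r$.

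\textbf{Step 2 (Anscombe plus CLT for the estimator).} Assumption~\ref{as:2} supplies a Markov-chain CLT for $\sqrt n(\bar r_n-\mathbb E_{\pi'}r)$ on $\Phi^A(v)$ with asymptotic variance $\sigma'^2$. The random index is $V_k$, so I apply the Anscombe Theorem. Its uniform-continuity-in-probability condition holds for Harris-ergodic additive functionals. Its random-index condition requires $V_k/k\to q\eta_v$ in probability, which I expect to follow from the budget-splitting design together with Assumption~\ref{as:session} and the permutation trick: users are split with probability $q$, sessions are short relative to campaigns, and hence $\Phi^A(v)$ sees about a fraction $q$ of the impressions. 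With these conditions, $\sqrt{V_k}(\bar r_{V_k}-\mathbb E_{\pi'}r)/\sigma'$ is asymptotically standard normal. Therefore, with asymptotic probability at least $1-\alpha$,
\begin{align*}
\Bigl|\tfrac{V_k}{q}\bar r_{V_k} - \tfrac{V_k}{q}\mathbb E_{\pi'}r\Bigr|\le \tfrac1q z_{\alpha/2}\sigma'\sqrt{V_k}.
\end{align*}

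\textbf{Step 3 (bias term).} It remains to bound $\bigl|\tfrac{V_k}{q}\mathbb E_{\pi'}r - \mathbb E S_{K^v_k}\bigr|$. By the triangle inequality this is at most
\begin{align*}
\tfrac{V_k}{q}\bigl|\mathbb E_{\pi'}r-\mathbb E_{\pi(v)}r\bigr| + \bigl|\tfrac{V_k}{q}-\eta_v k\bigr|\,|\mathbb E_{\pi(v)}r| + o(k).
\end{align*}
For the first term, using $V_k/(qk)\to\eta_v$, for large $k$ it is at most $2\eta_v|\mathbb E_{\pi'}r-\mathbb E_{\pi(v)}r|$ per unit $k$. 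The factor $2$ absorbs the convergence slack, which suggests the statement should be read with the bias normalized per period. The second term has order $\sqrt{V_k}$ fluctuations and is absorbed into the remaining slack of the $\tfrac4q z_{\alpha/2}\sigma'\sqrt{V_k}$ allowance.

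\textbf{Main obstacle.} The hard step is justifying $V_k/q\approx K^v_k$ at scale $\sqrt{V_k}$, and not merely at the law-of-large-numbers scale $o(k)$. That is, I must show that the fluctuations of the stopping time are controlled by the same CLT. My first attempt would be a renewal-type CLT for $V_k$ obtained by inverting the additive "time elapsed" functional, so that its variance is dominated by $\sigma'$. Failing that, I would weaken the claim to the per-period normalization $|\hat\Delta-\Delta|/k$, where the $\sqrt{V_k}/k$ terms vanish and only the bias term $2\eta_v|\mathbb E_{\pi'}r-\mathbb E_{\pi(v)}r|$ survives.
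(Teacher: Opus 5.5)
\textbf{Verdict: genuine gap.} Your outline matches the paper's proof, but Step~3 does not close, and the paper's proof passes the same point only through invalid steps.

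\textbf{Where your argument stalls.} The skeleton is the paper's: the split $|\hat\Delta-\Delta|\le|D_v|+|D_w|$ with an $\epsilon/2$ union bound, the Wald-like equation for $\mathbb E S_{K^v_k}$, Anscombe plus the Markov CLT for $S_{V_k}$, and a triangle inequality isolating $|\mathbb E_{\pi'}r-\mathbb E_{\pi(v)}r|$ (Lemma~\ref{le:ci2}, Theorem~\ref{thm:1}). After the CLT term you are left with three quantities: $\bigl|\tfrac{V_k}{q}-k\eta_v\bigr|\,|\mathbb E_{\pi(v)}r|$, the gap $|\mathbb E K^v_k-k\eta_v|$, and the Wald remainder. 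Each must be $o_p(\sqrt k)$ to fit inside the leftover $\tfrac1q z_{\alpha/2}\sigma'\sqrt{V_k}$ with probability tending to one. Nothing in the hypotheses gives this. Assumption~\ref{as:K} is a law of large numbers with no rate, and the identification $V_k/k\to q\eta_v$ that Step~3 needs is never derived; you only expect it.

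Your Step~1 claim of an $O(1)$ remainder is also unsupported. Geometric, as opposed to uniform, ergodicity bounds the Poisson solution only by a multiple of the drift function $V$. You would still need $\sup_k\mathbb E\,V(\Phi_{K^v_k})<\infty$.

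\textbf{Why your proposed repairs do not work.} A renewal CLT for $V_k$ would not rescue the argument. It produces a $\sqrt k$-order term whose variance comes from the impression-count process and has nothing to do with $\sigma'$. The event bounding that term then has probability bounded away from $1$, so the per-half budget $1-\alpha$ is lost. What you would actually obtain is a joint renewal-reward CLT with a different variance constant in $\epsilon$. Your per-period fallback yields only a consistency statement, not the claimed interval. Note also that the symmetric $w$-argument produces $k\eta_w$ and $\tfrac{4}{1-q}$ in the second entry, not the $\eta_v$ and $\tfrac4q$ printed in \eqref{eq:56}.

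\textbf{How the paper handles this point.} It does not handle it validly:
\begin{itemize}
\item In Lemma~\ref{le:ci2} it has $\mathbb P(|V_k/k-q\eta_v|>\epsilon)\to0$ for fixed $\epsilon$, itself resting on the unproved $V_k/K^v_k\to q$. It then applies this with $\epsilon/2=z_{\alpha/2}\sigma'\sqrt{V_k}/k$, which shrinks like $k^{-1/2}$.
\item In Theorem~\ref{thm:1} it silently drops the term $\bigl|q\eta_v\mathbb E_{\pi}r-\tfrac qk\mathbb E S_{K^v_k}\bigr|$. That term is only $o(1)$, measured against the same $O(k^{-1/2})$ threshold.
\item When un-normalizing, it loses the factor $k$ on the bias.
\end{itemize}

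\textbf{The bias scaling.} You are right that the bias scales like $V_k/q\approx k\eta_v$. With the constant bias term in \eqref{eq:56}, the statement fails whenever $\mathbb E_{\pi'}r\neq\mathbb E_{\pi(v)}r$ and the $v$- and $w$-biases do not cancel. So the obstacle you flagged is a gap in the result itself, not just in your write-up.

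\textbf{What would make the shared argument correct.} One option is to assume, together with the $w$-analogues:
\begin{itemize}
\item $V_k/q-k\eta_v=o_p(\sqrt k)$;
\item $\mathbb E K^v_k-k\eta_v=o(\sqrt k)$;
\item $\mathbb E S_{K^v_k}-\mathbb E_{\pi(v)}r\,\mathbb E K^v_k=o(\sqrt k)$.
\end{itemize}
You would then also replace $2\eta_v|\cdot|$ by $2k\eta_v|\cdot|$ in $\epsilon$. The other option is to allow genuine $\sqrt k$-scale count fluctuations and replace $\sigma',\sigma''$ by the asymptotic variance of a joint CLT for $(S_{V_k},V_k)$.
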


\begin{remark}
In practice, $\Phi(v)$, $\Phi^A(v)$, $\Phi(w)$, $\Phi^B(w)$, are Markov chains generated by such large numbers ($d$ and $d/2$) of users, that we expect that the corresponding invariant distributions are close, so that $\pi(v) \approx \pi'$ and $\pi(w) \approx \pi''$. In turn, if we set $q=1/2$, then we have effectively
\begin{align*}
    &\lim_{k\to\infty} \mathbb P\left(\frac{1}{k}|\hat \Delta - \Delta| \leq \frac{8}{k} z_{\alpha/2} \max(\sigma' \sqrt{V_k},\sigma'' \sqrt{W_k}) \right)\\
    &\geq 1-2\alpha.
\end{align*}
Suppose that $V_k \approx k \eta_v / 2$ and  $W_k \approx k \eta_w / 2$ (cf. Assumption~\ref{as:K}).
If we want to detect an change of size $\epsilon_0$ in the normalized treatment effect $\Delta/k$, then the number of periods $k$ in the experiment should be set such that
\begin{align*}
    \frac{8}{\sqrt{2k}} z_{\alpha/2} \max(\sigma' \sqrt{\eta_v},\sigma'' \sqrt{\eta_w}) \approx \epsilon_0.
\end{align*}
\end{remark}

\begin{remark}[Estimating ${\sigma'}^2$ and ${\sigma''}^2$]
In practice,
the asymptotic variances ${\sigma'}^2$ and ${\sigma''}^2$ for the Markov chains $\Phi^A(v)$ and $\Phi^B(w)$ are unknown and must be estimated. This can be done, for instance, by using methods such as Regenerative Simulation and Batch Means \cite{jones04}.
\end{remark}

The proof of Theorem~\ref{thm:main} appears in the Appendix (Section~\ref{sec:proof}). It proceeds in two main steps.
First, we use a CLT for Markov chains to show a confidence interval on the long-term revenue on the Markov chains $\Phi^A(v)$ and $\Phi^B(w)$ corresponding to the two treatments.
Then, we use a Wald-like equation for Markov chains to obtain the limit of the expectation of each partial sum in $\Delta$.

\section{Related Works}

Our work falls in the growing literature on causal inference in two-sided marketplaces with interference effects (cf. \cite{johari2021experimental,mrd,farias2022markovian} and the references therein).
Instead of controlling interference between experiment measurements by a multiple-randomization technique \cite{mrd}, or with an on-policy estimator \cite{farias2022markovian},
we tackle interference by modeling the specific interactions between users, advertisers, and the publisher. In particular, we model the users' page-transition trajectories, the advertisers' bids, the publisher's reserve price, and the advertisers' budget trajectories.
Whereas \cite{johari2021experimental} carefully models the interactions in the Airbnb marketplace with a continuous-time Markov chain model and its mean-field limit, we model an online advertising publisher with a discrete-time Markov chain and its ergodic limit.


In a major departure from the existing causal inference literature, we do not consider a finite population size, but a population whose size is random and treatment-dependent. Such a setting arises naturally in problems where the population is composed of interactions between a finite number of users and a publisher (e.g., webpage or mobile application). Although the number of users is finite, the number of interactions can increase following a favorable treatment.
In this setting, we are not interested in the average treatment effect, but the total treatment effect on the random-sized population.

Our inference problem on trajectories of ad impressions is also reminiscent of causal inference on time series \cite{causalts}, using assignment paths \cite{bojinov2022design}, in Markovian settings \cite{farias2022markovian}, and in non-stationary settings \cite{Wu2022}.
However, the astronomical number of impressions in online advertising systems makes it possible to analyze those systems in a steady state limit.

We employ an experimental design similar to \cite{linkedin}, but derive confidence intervals to give guidance on how long each experiment should last. 
In contrast to other designs based on multiple randomization \cite{mrd}, Degree of Interference \cite{ohnishi2025degree}, and clustering \cite{papadogeorgou2019causal}, our design lacks however the ability to correct for arbitrary interference effects that we have not explicitly modeled.


\section{Discussions}

We have modeled the complex interactions in online advertising as a Markov chain and an associated stopped random walk. 
Another---arguably more accurate---approach is to
model the interactions as a renewal-reward process, since advertising also affect the delay between consecutive impression opportunities.
We have selected one of several possible experimental designs: an alternative is to define each entire user-session---instead of each impression---as a unit of measurement. Moreover, the budget-splitting experimental approach can be replaced by an observational approach \cite{chernozhukov2018double}.

A number of open problems jump to mind. One is to estimate the asymptotic variances $\sigma'$ and $\sigma''$ in Theorem~\ref{thm:main} with methods such as \cite{jones04}.
Another is to model nonstationary dynamics, such as new advertisers arriving, old advertisers expiring, etc.

\bibliography{ref}
\bibliographystyle{alpha}

\appendix
\onecolumn

\section{Proofs}\label{sec:proof}

\subsection{Markov chain CLT}

First, we employ the following CLT that guarantees that for large numbers of impressions in $\Phi$, the average publisher revenue converges to a limit value.


\begin{theorem}[CLT \cite{jones}]\label{thm:clt}
Suppose that $\Phi$ be a Harris ergodic Markov chain on $\mathcal X$ with invariant distribution $\pi$ and let $r:\mathcal X\to \mathbb R$ be a Borel function. 
Let $\bar r_n = \frac{1}{n} \sum_{i=1}^n r(\Phi_i)$.
If $\Phi$ is geometrically ergodic with $\mathbb E_\pi r^2(x) (\log^+|r(x)|) < \infty$, then, for any initial distribution,
\begin{align*}
    \sqrt{n} (\bar r_n - \mathbb E_\pi r) \to N(0,\sigma_r^2), \quad \mbox{as }n\to \infty.
\end{align*}
\end{theorem}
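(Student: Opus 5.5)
Since this statement is quoted from \cite{jones}, I would not build a new argument; I would reconstruct the standard proof, which goes through strong mixing. The plan has three steps.

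\textbf{Step 1 (stationary start to mixing).} First I would work with the stationary chain, i.e.\ $\Phi_1\sim\pi$. I would then use the known fact that a geometrically ergodic Harris chain is $\alpha$-mixing (strongly mixing) at a geometric rate. Concretely, there exist $\rho<1$ and $C<\infty$ with
\begin{align*}
\alpha(n)\le C\rho^n .
\end{align*}
This follows from the bound
\begin{align*}
\|P^n(x,\cdot)-\pi\|_{TV}\le M(x)\rho^n, \qquad \int M\,d\pi<\infty,
\end{align*}
together with the Markov property. The Markov property reduces the mixing coefficient between past and future $\sigma$-fields to a one-step comparison of $P^n$ with $\pi$, integrated against $\pi$. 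So the stationary sequence $r(\Phi_i)$ is strongly mixing with exponentially decaying coefficients.

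\textbf{Step 2 (CLT under a borderline moment).} Next I would apply a CLT for stationary strongly mixing sequences that has a quantile-type moment condition. I would use Rio's (Doukhan--Massart--Rio) criterion:
\begin{align*}
\int_0^1 \alpha^{-1}(u)\,Q^2(u)\,du<\infty .
\end{align*}
Here $Q$ is the quantile function of $|r(\Phi_1)|$ and $\alpha^{-1}(u)=\#\{n:\alpha(n)>u\}$. With geometric mixing, $\alpha^{-1}(u)\asymp \log(1/u)$. The condition therefore becomes
\begin{align*}
\int_0^1 Q^2(u)\log(1/u)\,du<\infty,
\end{align*}
which is equivalent to $\mathbb E_\pi r^2\log^+|r|<\infty$.

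I expect this matching of the logarithmic moment to the mixing rate to be the main technical obstacle. Under this condition the series
\begin{align*}
\sigma_r^2=\mathrm{Var}_\pi r+2\sum_{k\ge1}\mathrm{Cov}_\pi\big(r(\Phi_1),r(\Phi_{1+k})\big)
\end{align*}
converges absolutely. The CLT $\sqrt n(\bar r_n-\mathbb E_\pi r)\Rightarrow N(0,\sigma_r^2)$ then holds under stationarity. If $\sigma_r^2=0$, the limit is degenerate and is interpreted as a point mass at zero.

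\textbf{Step 3 (arbitrary initial distribution).} Finally I would remove the stationarity assumption. For a Harris ergodic chain, if a CLT holds for one initial distribution then it holds for every initial distribution (Meyn--Tweedie, Prop.~17.1.6). The reason is a coupling argument: two copies of the chain started from different initial laws coalesce at an almost surely finite time. The partial sums of the two copies therefore differ by an almost surely finite quantity, and after dividing by $\sqrt n$ this difference vanishes. Combining this with Step 2 gives the statement for any initial distribution, as claimed.
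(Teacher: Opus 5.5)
Your proposal is correct and follows essentially the same route as the proof behind this statement. The paper gives no argument of its own and only cites \cite{jones}, where this case is proved as you do: geometric ergodicity gives geometric strong mixing of the stationary chain, the Doukhan--Massart--Rio quantile criterion then reduces to $\mathbb E_\pi r^2\log^+|r|<\infty$, and Meyn--Tweedie's Proposition~17.1.6 passes the result to an arbitrary initial distribution. The one point you assert without proof, that $M$ can be chosen with $\int M\,d\pi<\infty$, is not part of the bare definition of geometric ergodicity, but it follows from the drift characterization (Meyn--Tweedie, Theorem~15.0.1), so it is not a gap.
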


In order to apply Theorem~\ref{thm:clt}, we require that the Markov chain $\Phi$ satisfies Assumptions~\ref{as:1} and~\ref{as:2}.
Assumption~\ref{as:1} ensures that the publisher revenue function $r$ meets the requirements of the function $r$ of Theorem~\ref{thm:clt}:
\begin{align*}
    r(\Phi_i) = r\Big(p(\tilde X_i), Y\big(\tilde X_i,p(\tilde X_i),\tilde \Omega_i\big)\Big), \quad i=1,2,\ldots
\end{align*}
Observe that $\Phi$ already satisfies this assumption by \eqref{eq:p} and \eqref{eq:y}.
Assumption~\ref{as:2} ensures that the Markov chain $\Phi$ converges geometrically fast to its stationary distribution. An open problem is to use one of the conditions set out in \cite{equi} to verify that this holds for a carefully constructed pair of page $X_n$ and budget profile $\Omega_n$.


\subsection{Wald-like equation}
By applying \cite[Theorem~2.1 and~3.1]{moustakides} and observing that if the sequence $r(\Phi_n)$ converges to a limit $\mu$, then $\bar r_n$ also converges to $\mu$, we obtain the following result.

\begin{theorem}[Wald-like equation]\label{thm:wald}
Suppose that the Markov chain $\Phi(v)$ is $\varphi$-irreducible, aperiodic, and geometrically ergodic.
For a stopping time $K_k^v$ with $\mathbb E K_k^v < \infty$, we have
\begin{align*}
    \mathbb E S_{K_k^v}(v, \Phi(v)) = (\mathbb E_\pi r) (\mathbb E K_k^v) + o(\mathbb E K_k^v).
\end{align*}
\end{theorem}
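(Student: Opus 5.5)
The Wald-like equation (Theorem~\ref{thm:wald}) is about $\Phi(v)$ alone, so I would prove it by reducing to the results of \cite{moustakides} on additive functionals of ergodic Markov chains stopped at a stopping time.

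\textbf{Step 1 (martingale or Poisson decomposition).} Write
\begin{align*}
S_{K}(v,\Phi(v)) = \sum_{i=1}^{K} \mu + \sum_{i=1}^{K} \big(r(\Phi_i) - \mu\big),
\end{align*}
where $\mu = \mathbb E_\pi r$ and $K = K_k^v$. Since $K$ is a stopping time for the chain augmented with a clock, the first sum has expectation $\mu\,\mathbb E K$. It remains to show that the expectation of the centered sum is $o(\mathbb E K)$. Geometric ergodicity and $\varphi$-irreducibility, from Assumption~\ref{as:2}, give a solution $\hat r$ of the Poisson equation $\hat r - P\hat r = r - \mu$. The solution satisfies $|\hat r| \le C V$ for a drift function $V$. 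Then
\begin{align*}
\sum_{i=1}^{n}\big(r(\Phi_i)-\mu\big) = M_n + \hat r(\Phi_1) - P\hat r(\Phi_n),
\end{align*}
where $M_n$ is a martingale.

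\textbf{Step 2 (optional stopping).} Apply optional stopping to $M_{K\wedge n}$ and let $n\to\infty$, using $\mathbb E K<\infty$. The martingale increments $\hat r(\Phi_{i+1}) - P\hat r(\Phi_i)$ have uniformly bounded conditional first moments under the drift condition, so Wald's identity for martingale differences gives $\mathbb E M_K = 0$. This is the content I would extract from \cite[Theorems~2.1 and~3.1]{moustakides}. As a result,
\begin{align*}
\mathbb E S_K = \mu\,\mathbb E K + \mathbb E\big[\hat r(\Phi_1) - P\hat r(\Phi_K)\big].
\end{align*}

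\textbf{Step 3 (boundary term).} Show that $\mathbb E|P\hat r(\Phi_K)| = o(\mathbb E K)$. It is in fact $O(1)$ under a bounded drift function or bounded $r$. More generally, bound it by $\mathbb E\, V(\Phi_K) \le \sum_{i\le K}$-type arguments. One such argument is to observe that $r(\Phi_n)\to\mu$ in the Cesàro sense, as the paper suggests: if $\mathbb E[r(\Phi_n)]\to\mu$, then the averages converge too. Writing $\mathbb E S_K = \sum_n \mathbb E[r(\Phi_n)\mathbb I_{K\ge n}]$ then gives the $o(\mathbb E K)$ error once the tail contributions are controlled.

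I expect this last step to be the main obstacle. The stopped state $\Phi_K$ is not stationary, and $K$ may correlate with the path. I would first try to assume a bounded $r$ (revenues are bounded by $B$, per Assumption~\ref{as:session}). Then $|\hat r|$ is bounded under uniform ergodicity, which makes the boundary term $O(1) = o(\mathbb E K)$ as $\mathbb E K\to\infty$ by Assumption~\ref{as:K}.
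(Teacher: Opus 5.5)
\textbf{This proposal has a genuine gap.} Your skeleton (Poisson equation, martingale decomposition, optional stopping, boundary term) is the right one. It is the natural content of the paper's citation of \cite{moustakides}; besides that citation the paper offers only a Ces\`aro remark, and that remark is not an argument. Your attempt to use it through $\mathbb E S_K=\sum_n\mathbb E[r(\Phi_n)\mathbb I_{K\ge n}]$ fails. Since $\{K\ge n\}\in\mathcal F_{n-1}$, we have $\mathbb E[r(\Phi_n)\mathbb I_{K\ge n}]=\mathbb E[(Pr)(\Phi_{n-1})\mathbb I_{K\ge n}]$, so the stopping rule can select states, and the term $P\hat r(\Phi_K)$ is exactly the cost of that selection.

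The gap is in Steps~2--3, under the hypothesis actually stated: geometric, not uniform, ergodicity. Take an unbounded drift function with $PV\le\lambda V+b\mathbb I_C$. The drift condition then bounds $\mathbb E[|\hat r(\Phi_{i+1})-P\hat r(\Phi_i)|\mid\mathcal F_i]$ only by a multiple of $\lambda V(\Phi_i)+b$, which is not uniform in $i$. Bounded $r$ also does not make $\hat r$ bounded. For a reflected random walk on $\mathbb Z_+$ with negative drift and $r=\mathbb I_{\{0\}}$, $\hat r$ grows linearly. Stopping at the hitting time of a level $x_k\to\infty$ then makes $P\hat r(\Phi_K)$ of order $x_k$. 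So your claim that the boundary term is $O(1)$ for bounded $r$ is false. Retreating to uniform ergodicity strengthens the theorem's hypothesis. The ``$\sum_{i\le K}$-type'' bound you sketch gives only $\mathbb E V(\Phi_K)=O(\mathbb E K)$, not $o(\mathbb E K)$.

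The missing idea is to pass to $V^{1/2}$. By Jensen, $PV^{1/2}\le\lambda^{1/2}V^{1/2}+b^{1/2}\mathbb I_C$, so the chain is also $V^{1/2}$-geometrically ergodic. Since $0\le r\le B$, the Poisson solution $\hat r=\sum_{n\ge0}(P^nr-\mu)$ then satisfies $|\hat r|+|P\hat r|\le cV^{1/2}$. Work at the bounded time $K\wedge n$, where optional stopping is immediate once $\mathbb E V(\Phi_1)<\infty$:
\begin{align*}
\mathbb E\big[S_{K\wedge n}-\mu(K\wedge n)\big]=\mathbb E\hat r(\Phi_1)-\mathbb E P\hat r(\Phi_{K\wedge n}).
\end{align*}
Because $V(\Phi_j)-bj$ is a supermartingale, $\mathbb E V(\Phi_{K\wedge n})\le\mathbb E V(\Phi_1)+b\,\mathbb E K$. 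Jensen then gives $|\mathbb E P\hat r(\Phi_{K\wedge n})|\le c\,(\mathbb E V(\Phi_1)+b\,\mathbb E K)^{1/2}$, uniformly in $n$. Let $n\to\infty$ by dominated convergence, using $|S_{K\wedge n}-\mu(K\wedge n)|\le 2BK$. This yields $|\mathbb E S_K-\mu\,\mathbb E K|=O(\sqrt{\mathbb E K})=o(\mathbb E K)$, since $\mathbb E K^v_k\to\infty$ under Assumption~\ref{as:K}.

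You should make explicit the hypotheses this argument uses and the statement omits:
\begin{itemize}
\item a bound on $r$ (bounded, or $r^2\le cV$);
\item an initial law with $\mathbb E V(\Phi_1)<\infty$;
\item that $\Phi$ remains Markov with kernel $P$ with respect to the clock-augmented filtration in which $K^v_k$ is a stopping time. Otherwise $M_n$ is not a martingale for that filtration.
\end{itemize}
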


\subsection{Proof of main result}
Applying Theorem~\ref{thm:clt} to $\Phi^A(v)$ and $\Phi(v)$, we obtain the following result.

\begin{lemma}[Anscombe for $\Phi(v)$ and $\Phi^A(v)$]\label{le:AC1}
Suppose that $\Phi$, $\Phi^A$, and $r$ satisfy the assumptions of Theorem~\ref{thm:clt} and Assumption~\ref{as:session}.
Then, 
\begin{align*}
    \frac{\frac{1}{K^v_k} S_{K^v_k}(v,\Phi_i) - \mathbb E_\pi r}{\sigma_r / \sqrt{K^v_k}} \xrightarrow[]{d} N(0,1),\quad
    \frac{\frac{1}{V_k} S_{V_k}(v,\Phi^A) - \mathbb E_{\pi'} r}{\sigma' / \sqrt{V_k}} \xrightarrow[]{d} N(0,1).
\end{align*}
\end{lemma}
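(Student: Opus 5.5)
The plan is to combine the deterministic-index CLT of Theorem~\ref{thm:clt} with Anscombe's theorem on random indices. Two ingredients are needed for that.

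\textbf{Step 1: fixed-$n$ CLT.} Under Assumptions~\ref{as:1} and~\ref{as:2}, Theorem~\ref{thm:clt} applies to $\Phi(v)$ with invariant law $\pi=\pi(v)$. The revenue $r(\Phi_i)$ is a Borel function of the state by \eqref{eq:p} and \eqref{eq:y}. Hence
\begin{align*}
\sqrt{n}\,(\bar r_n(v,\Phi(v)) - \mathbb E_\pi r) \xrightarrow[]{d} N(0,\sigma_r^2).
\end{align*}
For $\Phi^A(v)$ the argument is the same. It is again a Markov chain of the form of Section~\ref{sec:mc}, now driven by the users in $A$ and advertiser budgets scaled by $q$. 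This yields the CLT with centering $\mathbb E_{\pi'} r$ and variance ${\sigma'}^2$. Assumption~\ref{as:session} is what justifies treating the permuted chain as the object to which the CLT applies.

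\textbf{Step 2: Anscombe's condition.} Anscombe's theorem requires that the normalized partial sums $Z_n = \sqrt n(\bar r_n - \mathbb E_\pi r)$ be uniformly continuous in probability. That is, for each $\varepsilon>0$ there must be a $\delta>0$ such that
\begin{align*}
\limsup_n \mathbb P\bigl(\max_{|m-n|\le \delta n}|S_m - S_n - (m-n)\mathbb E_\pi r| > \varepsilon\sqrt n\bigr) < \varepsilon.
\end{align*}
I would get this from the regenerative (split-chain) structure available under Harris and geometric ergodicity. Centered partial sums decompose into i.i.d. tour sums plus negligible boundary terms. Kolmogorov's maximal inequality then applies to the i.i.d. tour sums; the second-moment condition in Assumption~\ref{as:2} makes their variance finite. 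Geometric ergodicity gives exponential tails of the tour lengths, which control the boundary terms. An alternative is a functional CLT (Donsker invariance principle) for geometrically ergodic chains, which gives tightness of the partial-sum process directly.

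\textbf{Step 3: random indices.} Assumption~\ref{as:K} gives $K^v_k/k \to \eta_v>0$ in probability, so $K^v_k/(\eta_v k)\to 1$. Anscombe's theorem with $n_k = \lfloor \eta_v k\rfloor$ then yields $Z_{K^v_k} \xrightarrow[]{d} N(0,\sigma_r^2)$, which is the first display after dividing by $\sigma_r$.

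For $V_k$ I need an analogous ratio limit. The number of $A$-impressions in $k$ days should satisfy $V_k/k \to \eta'$ for some $\eta'>0$. I would justify this by applying Assumption~\ref{as:K} to $\Phi^A(v)$, which the theorem statement explicitly assumes. Using $\eta'$ in place of $\eta_v$, the same Anscombe argument gives the second display.

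\textbf{Main obstacle.} The hardest step is Step 2, verifying Anscombe's uniform-continuity condition for a Markov (not i.i.d.) sum. I would attempt the regeneration route first. A secondary subtlety is that $K^v_k$ may depend on the chain itself, but Anscombe's theorem allows arbitrary dependence between the random index and the summands, provided the index normalized deterministically converges in probability to a constant.
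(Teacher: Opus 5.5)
Your proposal is correct, and its architecture matches the paper's proof. Both combine the fixed-$n$ CLT of Theorem~\ref{thm:clt}, a verification of Anscombe's condition by cutting the window $\{n,\ldots,(1+\delta)n\}$ into independent blocks and applying Kolmogorov's maximal inequality, and Anscombe's theorem with the index limit from Assumption~\ref{as:K}. The genuine difference is where the independent blocks come from.

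The paper takes the blocks to be user sessions of length at most $L$. It asserts these are mutually independent, indeed that $\Phi_1$ and $\Phi_n$ are independent for $n>L$. That is where Assumption~\ref{as:session} enters, and it keeps the argument short. However, this independence is not implied by the stated assumptions: the state $\Phi_n=(\tilde X_n,\tilde\Omega_n,\tilde I_n)$ carries the budget profile $\tilde\Omega_n$ from one session to the next, on the slow time scale the paper itself emphasizes.

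You instead get independence from Harris recurrence through regeneration tours, with geometric ergodicity giving geometric tails for tour lengths. Your argument therefore rests only on Assumption~\ref{as:2}, Assumption~\ref{as:session} plays no role in this lemma (your Step~1 remark about it can go), and the route is the more defensible one. The cost is split-chain machinery. Two points to tighten when you write it out:
\begin{itemize}
\item In general only an $m$-step minorization is available, so tours are $1$-dependent rather than i.i.d. Apply Kolmogorov to odd and even tours separately.
\item Finite tour-sum variance does not follow from $\mathbb E_\pi r^2<\infty$ alone. It comes from pairing the $\log^+$ factor with the exponential moment of the tour length $\tau$, via Young's inequality for $x\log x$ and $e^y$. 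This bounds $\mathbb E[\tau\sum_{\mathrm{tour}}(r-\mathbb E_\pi r)^2]$, and hence the tour-sum second moment by Cauchy--Schwarz.
\end{itemize}

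Your FCLT alternative is cleaner still. The invariance principle for geometrically mixing sequences holds under the same $r^2\log^+|r|$ condition, and tightness of the partial-sum process gives Anscombe's condition at once.

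Finally, for $V_k$ you correctly need only $V_k/k\to\eta'>0$, from Assumption~\ref{as:K} applied to $\Phi^A(v)$. This lemma does not need the paper's later identification $\eta'=q\eta_v$.
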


\begin{proof}
Since $\Phi$ and $r$ satisfy the assumptions of Theorem~\ref{thm:clt}, 
let $\pi$ denote the invariant distribution, and $\sigma_r^2$ denote the asymptotic variance.
First, note that:
\begin{align*}
    U_n &= \frac{\frac{1}{n} S_n(v,\Phi_i) - \mathbb E_\pi r}{\sigma_r / \sqrt{n}},\\
    &= \frac{\frac{1}{n} \sum_{i=1}^{n} r(v,\Phi_i) - \mathbb E_\pi r}{\sigma_r / \sqrt{n}},\\
    &= \frac{\frac{1}{n} \sum_{i=1}^{n} r(Z_i) - \mathbb E_\pi r}{\sigma_r / \sqrt{n}},
\end{align*}
where $\sigma_r^2 = \textsc{var}(r(Z_1)) + 2 \sum_{i=2}^\infty \textsc{cov}_\pi(r(Z_1),r(Z_i))$.

By assumption, there exists a finite $L$ such that $\Phi_1$ and $\Phi_n$ are independent for all $n > L$, so that the infinite sum in $\sigma_r^2$ reduces to a sum of $L$ elements.
Hence, $\sigma_r^2 < \infty$.
By Theorem~\ref{thm:clt}, we have $U_n \to N(0,1)$.

Next, observe that by Assumption~\ref{as:K}, $\{K^v_k : k=1,2,\ldots\}$ is a sequence of positive integer-valued random variables that converges in probability:
\begin{align*}
    K^v_k / (k \eta_v) \to 1 \quad \mbox{as }k\to \infty.
\end{align*}

Finally, we check that $U_n$ satisfies Anscombe's Condition:
for every $\epsilon>0$ and $\eta>0$, there exist $\delta>0$ and $n_0$ such that for all $n>n_0$,
\begin{align*}
    \mathbb P\left( \max_{k:|k-n|\leq n\delta} |U_k-U_n| > \epsilon \right) < \eta.
\end{align*}
Observe that
\begin{align*}
    &\mathbb P\left( \max_{k:|k-n|\leq n\delta} |U_k-U_n| > \epsilon \right)\\
    &\approx \mathbb P\left(\max_{k=1,\ldots,2\delta} \left| \sum_{i=(1-\delta)n}^{(1-\delta)n+k} (r(v, \Phi_i) - \mathbb E_\pi r) \right| > \epsilon \sigma_r \sqrt{n} \right)\\
    &\approx \mathbb P\left(   \max_{k} \left| \sum_{\textrm{user }a} \left[ \underbrace{\sum_{i\in{\textrm{user }a\textrm{'s session}}} (r(v, \Phi_i) - \mathbb E_\pi r)}_{X_j} \right] \right| > \epsilon \sigma_r \sqrt{n} \right)\\
    &\approx \mathbb P\left(   \max_{k} \left| \sum_{{\textrm{user }a}} X_j \right| > \epsilon \sigma_r \sqrt{n} \right),
\end{align*}
where we split the sum over $2\delta$ entries into at most $2\delta/T$ terms $X_j$, each of which corresponds to a single user-session and contains at most $L$ page-transitions.
All of these terms are mutually independent by assumption,
therefore,
\begin{align*}
    \mathbb P\left( \max_{k:|k-n|\leq n\delta} |U_k-U_n| > \epsilon \right) 
    &\leq \frac{c^2}{\epsilon^2 \sigma_r^2 n} \textsc{var}\left(\sum_{j=0}^{2\delta/T-1} X_j \right)
\end{align*}
where the second inequality follows from the Kolmogorov Inequality and the fact that terms $X_j$ are independent by construction.
The claim then follows by Anscombe's Theorem \cite[Theorem~2.1]{anscombe60}:
$U_{K^v_k}\xrightarrow[]{d} N(0,1)$.

A similar argument for $\Phi^A(v)$ shows that $U_{V_k} \xrightarrow[]{d}  N(0,1)$.
\end{proof}


\begin{lemma}[Confidence interval for $\Phi^A$]\label{le:ci2}
Suppose that $\Phi^A$ and $r$ satisfy the assumptions of Lemma~\ref{le:AC1}, then for every $\alpha \in (0,1)$:
\begin{align*}
    \lim_{k\to\infty} \mathbb P\left( \left| \frac{1}{k} S_{V_k}(v,\Phi_i^A)  -  q \eta_v \mathbb E_{\pi'} r\right| \leq 2 z_{\alpha/2} \sigma' \sqrt{V_k}/k \right) \geq 1-\alpha.
\end{align*}
\end{lemma}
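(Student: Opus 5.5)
The plan is to start from the second convergence in Lemma~\ref{le:AC1}, which gives the Anscombe-type CLT for the stopped average on $\Phi^A(v)$. Define
\begin{align*}
  U_{V_k} = \frac{\frac{1}{V_k} S_{V_k}(v,\Phi^A) - \mathbb E_{\pi'} r}{\sigma'/\sqrt{V_k}} \xrightarrow[]{d} N(0,1).
\end{align*}
Since $N(0,1)$ is continuous, the portmanteau theorem gives $\lim_k \mathbb P(|U_{V_k}| \le z_{\alpha/2}) = 1-\alpha$. Multiplying the event inside by $V_k/k$, we get that with asymptotic probability $1-\alpha$,
\begin{align*}
  \left|\frac{1}{k} S_{V_k} - \frac{V_k}{k}\mathbb E_{\pi'} r\right| \le z_{\alpha/2}\sigma' \sqrt{V_k}/k .
\end{align*}

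The next step is to replace the random centering $\frac{V_k}{k}\mathbb E_{\pi'} r$ with the deterministic $q\eta_v \mathbb E_{\pi'} r$. I will use the analogue of Assumption~\ref{as:K} for $\Phi^A(v)$. Under the budget-splitting design, each user is assigned to $A$ independently with probability $q$, and the impression-generating dynamics in group $A$ replicate those of $\Phi(v)$. This gives $V_k/k \to q\eta_v$ in probability, which I will justify by a law of large numbers over the randomized user assignment, using Assumption~\ref{as:session} to treat sessions as the units. By the triangle inequality,
\begin{align*}
  \left|\tfrac{1}{k}S_{V_k} - q\eta_v\mathbb E_{\pi'} r\right| \le z_{\alpha/2}\sigma'\sqrt{V_k}/k + |\mathbb E_{\pi'} r|\,\left|\tfrac{V_k}{k} - q\eta_v\right|.
\end{align*}
It then suffices to show that the second term is at most $z_{\alpha/2}\sigma'\sqrt{V_k}/k$ with probability tending to one. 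The factor $2$ in the statement is precisely the slack that absorbs this term.

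This second-term comparison is the main obstacle. Convergence in probability of $V_k/k$ alone gives an error $o_P(1)$, whereas $\sqrt{V_k}/k \asymp \sqrt{q\eta_v/k}$ vanishes. I will therefore need a fluctuation rate: $|V_k/k - q\eta_v| = O_P(k^{-1/2})$, with a constant dominated by $z_{\alpha/2}\sigma'\sqrt{q\eta_v}/|\mathbb E_{\pi'}r|$.

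To get that rate, my first attempt is to write $V_k$ as a sum over users of the per-user impression counts. By Assumption~\ref{as:session}, each session has at most $L$ impressions, and the group memberships are independent Bernoulli$(q)$. A Chebyshev/CLT bound then gives $V_k - qK^v_k = O_P(\sqrt{k})$. If the constants do not cooperate, the fallback is to widen the interval by enough to absorb an $O_P(k^{-1/2})$ term, or to fold that term into the bias term $2\eta_v|\mathbb E_{\pi'}r-\mathbb E_{\pi(v)}r|$ already present in \eqref{eq:56}. This fallback is how I expect it to be used downstream in Theorem~\ref{thm:main}.

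Finally, combining the two events by a union bound and taking $\liminf_{k\to\infty}$ gives the stated $\ge 1-\alpha$ bound.
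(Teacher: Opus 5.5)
Your skeleton is the paper's: apply the Anscombe CLT of Lemma~\ref{le:AC1} around the random centering $\frac{V_k}{k}\mathbb E_{\pi'} r$, recenter at $q\eta_v\mathbb E_{\pi'} r$ using $V_k/k \to q\eta_v$, and use a union bound that spends the factor $2$ on the two terms. You are right that the recentering is the crux, and the paper does not handle it. It substitutes the random threshold $\epsilon/2 = z_{\alpha/2}\sigma'\sqrt{V_k}/k$, which is of order $k^{-1/2}$, into $\lim_{k}\mathbb P\bigl(|\frac{V_k}{k}\mathbb E_{\pi'} r - q\eta_v \mathbb E_{\pi'} r| > \epsilon\bigr) = 0$, a statement valid only for fixed $\epsilon$.

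Your repair does not close this hole either. An $O_P(k^{-1/2})$ rate is not what your union bound needs. You need $\mathbb P\bigl(\sqrt{k}\,|V_k/k - q\eta_v| > c\bigr) \to 0$ for the fixed constant $c = z_{\alpha/2}\sigma'\sqrt{q\eta_v}/|\mathbb E_{\pi'} r|$. Chebyshev only gives a bound $\mathrm{Var}(V_k)/(kc^2)$, which is $O(1)$, not $o(1)$. Moreover, whenever $\sqrt{k}(V_k/k - q\eta_v)$ has a nondegenerate (e.g.\ Gaussian) limit, as your Bernoulli-thinning CLT would produce, this probability tends to a positive number. What the argument actually requires is $\sqrt{k}(V_k/k - q\eta_v) \to 0$ in probability. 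In addition, the thinning argument only controls $V_k - qK^v_k$. That presupposes a coupling between the experimental chain $\Phi^A(v)$ and the counterfactual chain $\Phi(v)$, which the budget-splitting design does not provide. You would also still need a rate for $K^v_k/k - \eta_v$, on which Assumption~\ref{as:K} is silent.

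This is not a technicality: the lemma does not follow from the stated hypotheses. Take i.i.d.\ revenues with mean $\mu = \mathbb E_{\pi'} r$ and variance ${\sigma'}^2$; this is a uniformly ergodic chain with sessions of length one. Let $K^v_k \sim \mathrm{Poisson}(\eta_v k)$ and let $V_k$ be its independent $q$-thinning, so $V_k \sim \mathrm{Poisson}(q\eta_v k)$, independent of the revenues. Then $(S_{V_k} - q\eta_v k\mu)/\sqrt{q\eta_v k} \to N(0, {\sigma'}^2 + \mu^2)$ in distribution and $\sqrt{V_k/(q\eta_v k)} \to 1$ in probability. Hence the probability in the lemma tends to $\mathbb P\bigl(|N(0,1)| \le 2 z_{\alpha/2}\sigma'/\sqrt{{\sigma'}^2 + \mu^2}\bigr)$, which is strictly less than $1-\alpha$ as soon as $\mu^2 > 3{\sigma'}^2$. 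So a hypothesis controlling the fluctuations of the impression count must be added, such as $V_k - q\eta_v k = o_P(\sqrt{k})$; under it your union bound goes through verbatim. Your fallbacks, widening the interval or absorbing the error into the bias term of Theorem~\ref{thm:main}, change the statement rather than prove it.
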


\begin{proof}
By Lemma~\ref{le:AC1}:
\begin{align*}
    \lim_{k\to\infty} \mathbb P\left( \left| S_{V_k}(v,\Phi^A(v)) - V_k \mathbb E_{\pi'} r \right| > z_{\alpha/2} \sigma' \sqrt{V_k} \right) = \alpha.
\end{align*}
By Assumption~\ref{as:K}, and the fact that $V_k/k = (V_k/K^v_k)(K^v_k/k) \xrightarrow[]{d} q \eta_v$,
\begin{align*}
    \lim_{k\to\infty} \mathbb P(|V_k / k - q \eta_v|>\epsilon) = 0,\\
        \lim_{k\to\infty} \mathbb P\left(\left|\frac{V_k}{k} \mathbb E_{\pi'} r - q \eta_v \mathbb E_{\pi'} r\right|>\epsilon\right) = 0.
\end{align*}

Let $A,B,C$ denote three non-negative random variables.
Observe that
\begin{align*}
 A \leq B+C &\implies \{B+C \leq \epsilon \} \subseteq \{A \leq \epsilon \}.
\end{align*}
Hence,
\begin{align}\label{eq:588}
    \mathbb P(A \leq \epsilon) &\geq P(B+C \leq \epsilon) = 1 - P(B+C> \epsilon)\nonumber \\
    &\geq 1- \mathbb P(B>\epsilon/2) - \mathbb P(C>\epsilon/2),
\end{align}

By \eqref{eq:588} and by setting $\epsilon/2 = z_{\alpha/2} \sigma' \sqrt{V_k}/k$, we obtain
\begin{align*}
    &\mathbb P\left( \left| \frac{1}{k}S_{V_k}(v,\Phi^A) - q \eta_v \mathbb E_{\pi'} r \right| \leq 2 z_{\alpha/2} \sigma' \sqrt{V_k}/k \right) \\
    &\geq 1-\mathbb P\left( \left| \frac{1}{k}S_{V_k}(v,\Phi^A) - \frac{V_k}{k} \mathbb E_{\pi'} r \right| > z_{\alpha/2} \sigma' \sqrt{V_k}/k \right) -\mathbb P\left(|\frac{V_k}{k} \mathbb E_{\pi'} r - q \eta_v \mathbb E_{\pi'} r|>\epsilon/2 \right).
\end{align*}
In turn, by taking the limit, we obtain
\begin{align*}
    \lim_{k\to\infty} \mathbb P\left( \left| \frac{1}{k} S_{V_k}(v,\Phi_i^A)  -  q \eta_v \mathbb E_{\pi'} r\right| \leq 2 z_{\alpha/2} \sigma' \sqrt{V_k}/k \right) \geq 1-\alpha.
\end{align*}
\end{proof}

Combining Theorem~\ref{thm:wald} and Lemma~\ref{le:AC1}, we obtain the following confidence interval for the estimate of publisher revenue with treatment $v$ (in Markov chain $\Phi(v)$).

\begin{theorem}[$\Phi(v)$ and $\Phi(w)$ confidence interval]\label{thm:1}
Suppose that $\Phi(v)$, $\Phi^A(v)$, $\Phi(w)$, $\Phi^B(w)$, and $r$ satisfy the assumptions of Theorem~\ref{thm:wald} 
and Lemma~\ref{le:AC1}, then for every $\alpha \in (0,1)$:
\begin{align*}
    \lim_{k\to\infty} \mathbb P\left( \left|  S_{V_k}(v,\Phi_i^A)  -  q \mathbb E S_{K^v_k}(v, \Phi(v)) \right| \leq q \eta_v \left|\mathbb E_{\pi'} r - \mathbb E_{\pi(v)} r \right| + 2 z_{\alpha/2} \sigma' \sqrt{V_k} \right) \geq 1-\alpha,\\
    \lim_{k\to\infty} \mathbb P\left( \left|  S_{W_k}(w,\Phi_i^B)  -  q \mathbb E S_{K^w_k}(w, \Phi(w)) \right| \leq q \eta_v \left|\mathbb E_{\pi''} r - \mathbb E_{\pi(w)} r \right| + 2 z_{\alpha/2} \sigma'' \sqrt{W_k} \right) \geq 1-\alpha.
\end{align*}
\end{theorem}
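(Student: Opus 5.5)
The plan is to prove the first inequality (treatment $v$ on $\Phi^A(v)$). The second follows verbatim with $(v,\Phi^A,V_k,\pi',\sigma')$ replaced by $(w,\Phi^B,W_k,\pi'',\sigma'')$. The argument is a triangle inequality that splits the deviation into a stochastic term and a deterministic term:
\begin{align*}
\left| S_{V_k}(v,\Phi^A) - q\,\mathbb E S_{K^v_k}(v,\Phi(v))\right|
\le \left| S_{V_k}(v,\Phi^A) - k q\eta_v \mathbb E_{\pi'} r\right|
+ \left| k q\eta_v \mathbb E_{\pi'} r - q\,\mathbb E S_{K^v_k}(v,\Phi(v))\right| .
\end{align*}

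\textbf{Stochastic term.} Lemma~\ref{le:ci2}, after multiplying through by $k$, states that
\begin{align*}
\left| S_{V_k}(v,\Phi^A) - k q\eta_v \mathbb E_{\pi'} r\right| \le 2 z_{\alpha/2}\sigma'\sqrt{V_k}
\end{align*}
with limiting probability at least $1-\alpha$. This is exactly the random part of the target bound, so nothing further is needed here.

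\textbf{Deterministic term.} I would use Theorem~\ref{thm:wald}:
\begin{align*}
\mathbb E S_{K^v_k}(v,\Phi(v)) = (\mathbb E_{\pi(v)} r)(\mathbb E K^v_k) + o(\mathbb E K^v_k).
\end{align*}
Next I need $\mathbb E K^v_k = k\eta_v + o(k)$. Assumption~\ref{as:K} gives only convergence in probability of $K^v_k/k$, so I would add uniform integrability of $K^v_k/k$. This is justified, for instance, by a bound on the number of impressions per day, which keeps the concurrency bound $S$ and session length $L$ of Assumption~\ref{as:session} finite. With that, the deterministic term equals
\begin{align*}
k q\eta_v\left|\mathbb E_{\pi'} r - \mathbb E_{\pi(v)} r\right| + o(k).
\end{align*}
The main term matches the bias in the statement, apart from the factor $k$ discussed below.

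\textbf{Combining.} On the event from Lemma~\ref{le:ci2}, the total deviation is bounded by the sum of the two pieces plus an $o(k)$ remainder. The remainder must be absorbed. One option is to widen the bias term slightly, which is harmless since the target theorem later doubles it (the factor $2\eta_v$ in \eqref{eq:56}). The other is to note that the remainder is negligible against $\sqrt{V_k}\asymp\sqrt{k}$ once the Wald remainder is shown to be $O(1)$, using geometric ergodicity and the bias $\sum_n |\mathbb E r(\Phi_n)-\mathbb E_\pi r|<\infty$. The limit statement with probability at least $1-\alpha$ then follows from Lemma~\ref{le:ci2} and monotonicity of probabilities.

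\textbf{Main obstacle.} The hardest step is the scaling of the bias term. The statement writes $q\eta_v|\mathbb E_{\pi'}r-\mathbb E_{\pi(v)}r|$ without a factor $k$, which is consistent only if the bias is read per period, i.e. as an $O(k)$ quantity normalized by $k$. Otherwise one must assume $\pi'\approx\pi(v)$ closely enough that $k|\mathbb E_{\pi'}r-\mathbb E_{\pi(v)}r|$ stays bounded. I would first try to establish the $O(1)$ Wald remainder through the geometric-ergodicity bias bound, and then state the bias with an explicit factor $k$, noting that it matches the statement under the per-period normalization.
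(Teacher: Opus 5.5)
Your route is the paper's. Its proof also applies the triangle inequality, with Lemma~\ref{le:ci2} for the random part and Theorem~\ref{thm:wald} with Assumption~\ref{as:K} for the rest; it only splits your deterministic term into the bias $q\eta_v|\mathbb E_{\pi'}r-\mathbb E_{\pi(v)}r|$ and the Wald error. Your three flags all point at real features of that proof:
\begin{itemize}
\item It ends with the inequality normalized by $1/k$. Multiplied through, this carries the bias $kq\eta_v|\mathbb E_{\pi'}r-\mathbb E_{\pi(v)}r|$, exactly as you say.
\item It reads $\mathbb E K^v_k/k\to\eta_v$ directly off Assumption~\ref{as:K}.
\item It drops the Wald error without comment. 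It has only shown that error is $o(1)$, in a scale where the interval width is of order $k^{-1/2}$.
\end{itemize}
Two small corrections. First, bounded impressions per day is a genuinely new assumption: Assumption~\ref{as:session} bounds session length and concurrency, not the number of sessions per day. Second, for the second inequality your substitution must also send $q\mapsto 1-q$ and $\eta_v\mapsto\eta_w$, since $W_k/k\to(1-q)\eta_w$. As printed, with $q$ and $\eta_v$, that inequality does not follow.

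The remaining gap, which the paper shares, is absorbing the remainder $\rho_k=q(\mathbb E_{\pi(v)}r)(\mathbb E K^v_k-k\eta_v)+qR_k$, where $R_k=o(\mathbb E K^v_k)$ is the Wald error. Neither of your fixes closes it.

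Widening the bias fails when $\mathbb E_{\pi'}r=\mathbb E_{\pi(v)}r$, since no multiple of zero absorbs a nonzero $o(k)$ term. It is also not free downstream. In the proof of Theorem~\ref{thm:main}, half of the first entry of \eqref{eq:56} equals exactly $1/q$ times the bound of Theorem~\ref{thm:1}. So the factor $2$ is consumed by the $\epsilon/2$ union bound; it is not spare slack.

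Your second route needs more than $R_k=O(1)$. The piece $(\mathbb E_{\pi(v)}r)(\mathbb E K^v_k-k\eta_v)$ is only $o(k)$ under convergence in probability plus uniform integrability, and $\mathbb E_{\pi(v)}r\neq 0$ in any nontrivial case. So you must also assume a rate $\mathbb E K^v_k=k\eta_v+o(\sqrt{k})$. With that rate the argument closes in two steps:
\begin{itemize}
\item Get $R_k=O(1)$ from the Poisson equation $\hat r-P\hat r=r-\mathbb E_{\pi(v)}r$ (with $P$ the transition kernel) and optional stopping. These give $\mathbb E S_{K^v_k}-(\mathbb E_{\pi(v)}r)\,\mathbb E K^v_k=\mathbb E[\hat r(\Phi_1)-\hat r(\Phi_{K^v_k+1})]$, which is bounded whenever $\hat r$ is. The marginal bias sum $\sum_n|\mathbb E r(\Phi_n)-\mathbb E_{\pi(v)}r|$ does not suffice, because $K^v_k$ depends on the path.
\item Absorb $\rho_k=o(\sqrt{k})$ by applying Lemma~\ref{le:ci2} at any $\alpha'>\alpha$, using $V_k\geq q\eta_v k/2$ with probability tending to one, and then letting $\alpha'\downarrow\alpha$. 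This yields the claim with $\liminf$ in place of $\lim$.
\end{itemize}
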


\begin{remark}
In practice, $\Phi$ and $\Phi^A$ are Markov chains generated by such large numbers $d$ and $d/2$ of users, that we expect that the invariant distributions $\pi$ and $\pi'$ are very close, so that $\left|\mathbb E_{\pi'} r - \mathbb E_\pi r \right|$ is negligible.
\end{remark}

\begin{proof}
By Theorem~\ref{thm:wald}, we have
\begin{align*}
    \mathbb E S_{K^v_k}(v, \tilde \Phi(v)) = (\mathbb E_\pi r) (\mathbb E K^v_k) + o(\mathbb E K^v_k)
\end{align*}
By Assumption~\ref{as:K}, we have $\mathbb E K^v_k/k \to \eta_v$, so
\begin{align}
    \lim_{k\to\infty} \left| \frac{1}{k \eta_v} \mathbb E S_{K^v_k}(v, \tilde \Phi(v)) - (\mathbb E_\pi r) \right| = 0,\\
    \lim_{k\to\infty} \left| q \frac{1}{k} \mathbb E S_{K^v_k}(v, \tilde \Phi(v)) - (q \eta_v \mathbb E_\pi r) \right| = 0
\end{align}
Recall from Lemma~\ref{le:ci2}
\begin{align*}
    \lim_{k\to\infty} \mathbb P\left( \left| \frac{1}{k} S_{V_k}(v,\Phi_i^A)  -  q \eta_v \mathbb E_{\pi'} r\right| \leq 2 z_{\alpha/2} \sigma' \sqrt{V_k}/k \right) \geq 1-\alpha.
\end{align*}
By the Triangle Inequality:
\begin{align*}
    &\left| \frac{1}{k} S_{V_k}(v,\Phi_i^A)  -  q \frac{1}{k} \mathbb E S_{K^v_k}(v, \tilde \Phi(v)) \right|\\
    &= \left| \frac{1}{k} S_{V_k}(v,\Phi_i^A)  - q \eta_v \mathbb E_{\pi'} r  +  q \eta_v \mathbb E_{\pi'} r - q \eta_v \mathbb E_\pi r  +  q \eta_v \mathbb E_\pi r - q \frac{1}{k} \mathbb E S_{K^v_k}(v, \tilde \Phi(v)) \right|\\
    &\leq \left| \frac{1}{k} S_{V_k}(v,\Phi_i^A)  - q \eta_v \mathbb E_{\pi'} r \right| + \left| q \eta_v \mathbb E_{\pi'} r - q \eta_v \mathbb E_\pi r \right| + \left| q \eta_v \mathbb E_\pi r - q \frac{1}{k} \mathbb E S_{K^v_k}(v, \tilde \Phi(v)) \right|
\end{align*}
Therefore,
\begin{align*}
    \lim_{k\to\infty} \mathbb P\left( \left| \frac{1}{k} S_{V_k}(v,\Phi_i^A)  -  q \frac{1}{k} \mathbb E S_{K^v_k}(v, \tilde \Phi(v)) \right| - \left| q \eta_v \mathbb E_{\pi'} r - q \eta_v \mathbb E_\pi r \right| \leq 2 z_{\alpha/2} \sigma' \sqrt{V_k}/k \right) \geq 1-\alpha.
\end{align*}
\end{proof}

Finally, we use Theorem~\ref{thm:1} to prove our main result as follows.

\begin{proof}[Proof of Theorem~\ref{thm:main}]
By definition, 
\begin{align*}
    \Delta &= \mathbb E \left[ S_{K^v_k}(v, \Phi(v)) - S_{K^v_k}(w, \Phi(w)) \right],\\
    \hat \Delta 
    &= \frac{1}{q} \cdot S_{V_k}(v, \Phi^A(v)) - \frac{1}{1-q} \cdot S_{W_k}(w, \Phi^B(w)).
\end{align*}
Hence,
\begin{align*}
    \hat \Delta - \Delta 
    &= \frac{1}{q} \cdot S_{V_k}(v, \Phi^A(v)) - \mathbb E \left[ S_{K^v_k}(v, \Phi(v)) \right] + \mathbb E \left[ S_{K^v_k}(w, \Phi(w)) \right] - \frac{1}{1-q} \cdot S_{W_k}(w, \Phi^B(w))
\end{align*}
or
\begin{align*}
    |\hat \Delta - \Delta| 
    &\leq \underbrace{\left|\frac{1}{q} \cdot S_{V_k}(v, \Phi^A(v)) - \mathbb E \left[ S_{K^v_k}(v, \Phi(v)) \right]\right|}_{B} + \underbrace{\left|\frac{1}{1-q} \cdot S_{W_k}(w, \Phi^B(w)) - \mathbb E \left[ S_{K^v_k}(w, \Phi(w)) \right] \right|}_{C}
\end{align*}
By \eqref{eq:588} and by setting $\epsilon$ as in \eqref{eq:56}, we have
\begin{align*}
    \mathbb P(|\hat \Delta - \Delta| \leq \epsilon) &\geq 1- \mathbb P(B>\epsilon/2) - \mathbb P(C>\epsilon/2)\\
    &\geq 1-\alpha-\alpha,
\end{align*}
where the last inequality follows from Theorem~\ref{thm:1}.
The claim follows.
\end{proof}

\end{document}